\documentclass[twoside,11pt]{article}

%

\usepackage{jmlr2e}
\usepackage[margin=1in]{geometry}
\usepackage{amsmath}

\usepackage[english]{babel}
\usepackage[utf8]{inputenc}
\usepackage{graphicx}
\usepackage[colorinlistoftodos]{todonotes}
\usepackage{enumerate}
\usepackage[section]{placeins}
\usepackage{here}
\usepackage{verbatim}
\usepackage[font=small,labelfont=bf]{subcaption}
\usepackage[font=small,labelfont=bf]{caption}

\usepackage{titling}
\usepackage{blindtext}
\usepackage{hyperref}
\usepackage[title]{appendix}
\widowpenalty=1000
\clubpenalty=1000

\begin{document}
\title{Learning Directed Graphical Models from Gaussian Data }
\author{Katherine Fitch\thanks{Katherine Fitch is with the Chair of Operations Research, Technical University of Munich, Munich, Germany. katie.fitch@tum.de} }

    \maketitle
\begin{abstract}
In this paper, we introduce a  new directed  graphical model from Gaussian data: the Gaussian graphical interaction model (GGIM). The development of this  model comes from considering stationary Gaussian processes on graphs, and leveraging the equations between the resulting steady-state covariance matrix and the Laplacian matrix representing the interaction graph. Through the presentation of conceptually straightforward theory, we develop the new model and provide interpretations of the edges in the graphical model in terms of statistical measures. We show that when restricted to undirected graphs, the Laplacian matrix representing a GGIM is equivalent to the standard inverse covariance matrix that encodes conditional dependence relationships. Furthermore, our approach leads to a natural definition of \emph{directed} conditional independence of two elements in a stationary Gaussian process. We demonstrate that the problem of learning sparse GGIMs for a given observation set can be framed as a LASSO problem. By comparison with the problem of inverse covariance estimation, we prove a bound on the difference between the covariance matrix corresponding to a sparse GGIM and the covariance matrix corresponding to the $l_1$-norm penalized maximum log-likelihood estimate. Finally, we consider the problem of learning GGIMs associated with sparse directed conditional dependence relationships. In all, the new model  presents a novel perspective on directed relationships between variables and significantly expands on the state of the art in Gaussian graphical modeling.

\vspace{5mm}
\noindent \textbf{Keywords}: Gaussian graphical models, Gaussian processes, model selection, directed graphical models, maximum likelihood estimation
\end{abstract}

\section{Introduction}
Classically, a Gaussian graphical model (GGM) of an observation set is a graph that encodes conditional independence relationships between variables, where each variable is a node in the graph \citep{whittaker2009}. That is, the presence of an edge between nodes $i$ and $j$ indicates that variables $i$ and $j$ are conditionally dependent given the remaining variables. Consequently, the absence of an edge indicates conditional independence between $i$ and $j$. It is well known that the sparsity pattern of the off-diagonal elements of the precision matrix (inverse covariance matrix) encodes these conditional independence relationships. 

Learning the structure of the conditional independence graph is accomplished by finding the precision matrix that balances sparsity, often with respect to the $l_1$-norm, with maximizing the Gaussian log-likelihood function associated with the set of i.i.d.\ observations \citep{Friedman2008, banerjee2008}. The non-zero off-diagonal entries of the resulting precision matrix are then taken to be edges in the underlying conditional independence graph. As the covariance matrix is by definition symmetric, its inverse is also symmetric and therefore this approach is limited to producing only undirected graphs. The precision matrix is in fact a Laplacian matrix for an undirected graph, and the conditional independence graph can also be viewed as an undirected graph of interactions between variables. 

The problem of estimating inverse covariance matrices has been extensively studied and applied to applications ranging from analysis of functional brain connectivity \citep{Kruschwitz2015} to speech recognition \citep{zhang2013} to computer vision applications \citep{Souly2016}. Among estimators that regularize the Gaussian log-likelihood by the $l_1$-norm, there are algorithmic approaches that employ block coordinate descent \citep{banerjee2008}, the graphical LASSO  \citep{Friedman2008}, quadratic approximation \citep{Hsieh2011}, second order Newton-like methods \citep{Oztoprak2012}, and many others. Additional methods of sparity promotion include greedy forward-backward search to determine the location of zeros in the precision matrix \citep{Lauritzen1996}, $l_0$-norm regularization \citep{Marjanovic2015}, and $l_q$ penalization \citep{Marjanovic2014}, where $0 \leq q < 1$.  

While GGMs and sparse inverse covariance estimation provide a straightforward and generalizable framework for approximating conditional independence relationships between variables from a sample set of observations, the question remains of how to graphically model directed interactions. For example, consider a random vector with three variables, $a$, $b$, and $c$, where $a$ influences $b$, $b$ influences $c$, and $c$ influences $a$. The conditional independence graph would contain undirected edges between all three nodes, however, the true sparsest model of interaction would be a directed cycle from $a$ to $b$ to $c$ to $a$.  This is to say that conditional dependence between any two variables  does not imply that the two variables influence each other equally, therefore,  inverse covariance estimation provides limited information on the interactions between variables. 

Learned directed graphical models such as Bayesian networks \citep{barber2012}, and linear Gaussian structure equation models (SEMs) \citep{Bollen1989} overcome some of the limitations of inverse covariance estimation by representing casual relationships between variables as edges in directed acyclic graphs (DAGs). While these models have been successfully and broadly implemented for a wide variety of applications, they still have non-trivial disadvantages. Most notably, due to their hierarchical nature, neither Bayesian networks nor SEMs can directly model cyclic interactions. As a result, the simple directed cycle from the previous paragraph cannot be accurately reproduced through these methods. This is an issue because the ability to learn and model cyclic interactions is critical to our understanding of complex systems. In a biological system, for example, cyclic interactions can be indicative of  regulatory feedback loops. Thus, when attempting to learn graphical structure from data, the failure to identify feedback loops results in an incomplete understanding of the fundamental mechanisms of the biological system. There is, therefore, a need for learned graphical models of Gaussian data that represent directed relationships between variables without topological restrictions.

\subsection{Contributions}
In this paper we significantly expand the modeling capabilities for Gaussian data by exploring features that can be modeled as directed relationships, namely interactions between variables and conditional independence, and developing a consistent theoretical framework uniting the model with traditional GGMs. This theoretical framework arises from considering a sample covariance matrix to be the steady-state covariance matrix of a stationary Gaussian process on a graph. We then examine the relationship between the inverse covariance matrix and the Laplacian matrix representing the interaction graph. The subtle perspective shift to stationary Gaussian processes allows us to leverage matrix equations to a great advantage, as we are able to unite the concepts of interactions between variables and conditional dependence without the need for applying complex rules to the nodes and edges in the graph. The connection between network topology of a stationary Gaussian process and inverse covariance matrices has been previously established (for example, \citep{Hassan2016}). However, thus far the focus has been limited to applying techniques from sparse inverse covariance estimation to the problem of \emph{undirected} network identification. 

The main contributions of this paper include the introduction of a new directed graphical model, the \emph{Gaussian graphical interaction model} (GGIM), and the definition of directed conditional independence. 
We demonstrate that the problem of learning $l_1$-norm sparse GGIMs for a given observation set can be framed as a LASSO problem and provide a bound on the difference between the covariance matrix estimate corresponding to a GGIM and the maximum likelihood estimate (MLE) of the covariance matrix, as well as the $l_1$-norm penalized maximum log-likelihood estimate  of the covariance matrix. 

The GGIM model is mathematically elegant and intuitive, and learning sparse models is simple to implement. As the GGIM model is not restricted to the class of directed acyclic graphs, it is able to model a large variety of general relationships between variables that are not possible with existing graphical models.  

\subsection{Example: Cell signaling network} \label{sec:ex}
To motivate the enhanced graphical modeling potential of GGIMs, we demonstrate the new model when applied to the same dataset as used by the authors of  \citep{Friedman2008} in their paper introducing the GLASSO algorithm for sparse precision matrix estimation. The dataset is from \citep{sachs2005} and contains flow cytometry data of $n=7466$ cells and $p=11$ proteins. Figure \ref{fig:gull} reproduces the classic signaling network  \citep[Figure~2]{sachs2005}. 

The conditional independence graph from \citep{Friedman2008} is shown in Figure \ref{fig:tiger}. To compare, we show the new graph model introduced in this paper, GGIM,  in Figures \ref{fig:mouse}. Details for computing the graph \ref{fig:mouse} can be found in Section \ref{sec:det}. All graphs contain 18 edges, where, in the directed setting, an edge from node $i$ to $j$ is counted independently from an edge $j$ to $i$.  Solid lines indicate edges that are present in the classic signaling graph and light gray dashed lines indicate edges that are not present in the classic signaling graph. In graph \ref{fig:mouse}, black lines indicate edges identified in the same orientation as the classic signaling graph. Dark gray lines indicate edges identified in the reverse orientation as the classic signaling graph.

The GGIM (\ref{fig:mouse}) graph captures more  of the edges of Figure (\ref{fig:gull}) than the undirected conditional independence graph (\ref{fig:tiger}), with the majority of correctly identified edges also in the correct orientation.
\begin{figure}[h!]
    \centering
    \begin{subfigure}[b]{0.25\textwidth}
        \includegraphics[width=\textwidth]{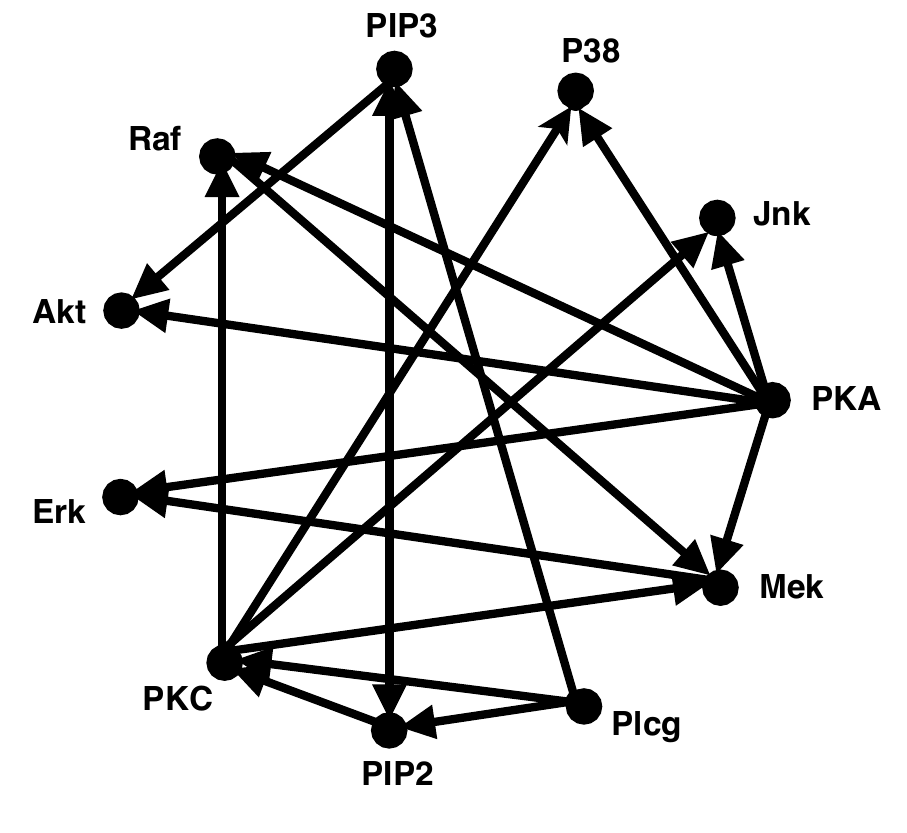}
        \caption{Cell signaling graph from \citep{sachs2005}}
        \label{fig:gull}
    \end{subfigure}
    ~ 
    \begin{subfigure}[b]{0.25\textwidth}
        \includegraphics[width=\textwidth]{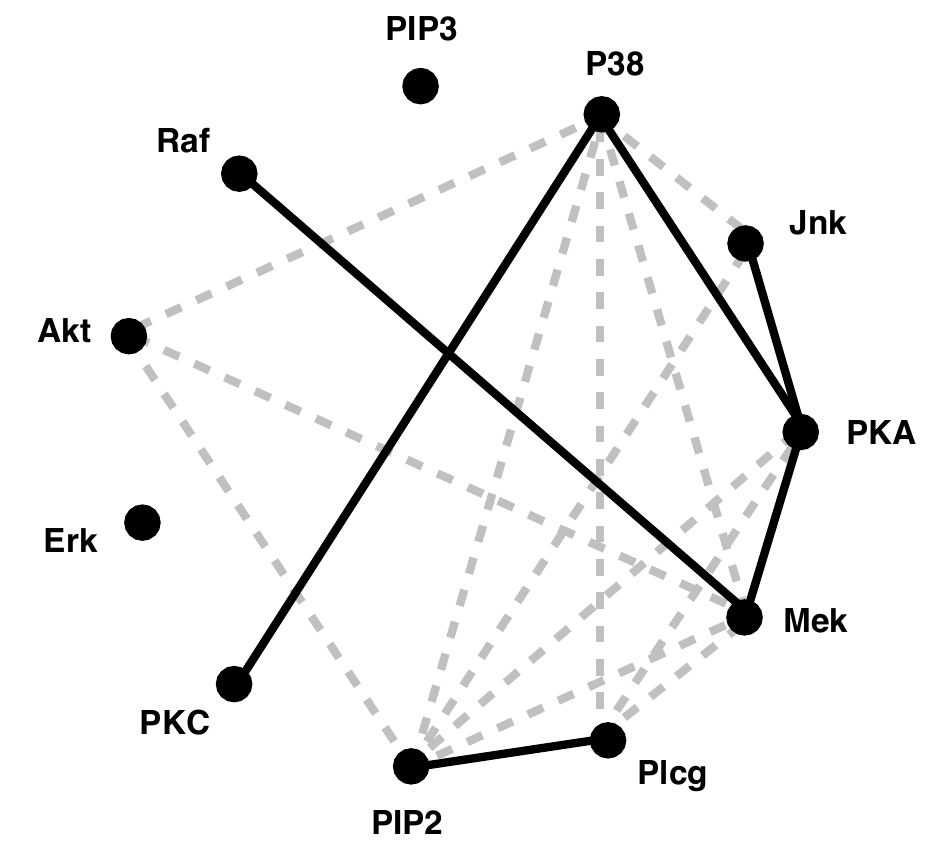}
        \caption{Undirected conditional independence graph from \citep{Friedman2008}}
        \label{fig:tiger}
    \end{subfigure}
    
    ~ 
    \begin{subfigure}[b]{0.25\textwidth}
        \includegraphics[width=\textwidth]{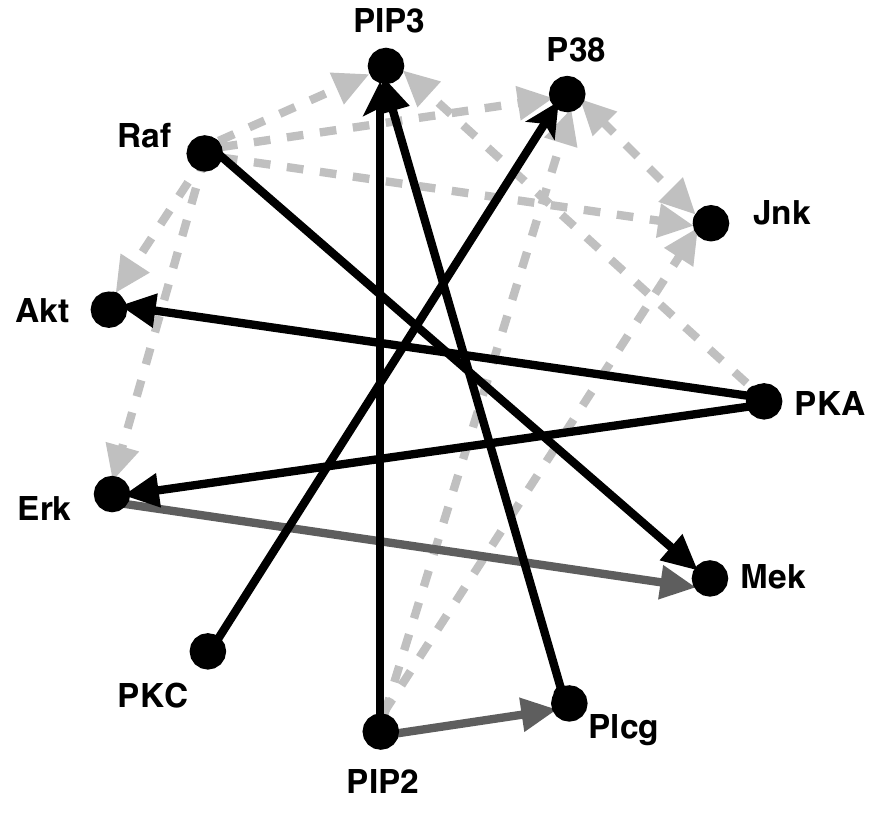}
        \caption{Directed GGIM}
        \label{fig:mouse}
    \end{subfigure}
    \caption{Comparison of graphical models from cell interaction data. (a). Classic signaling graph from \citep{sachs2005}. (b). Conditional independence graph generated by the GLASSO algorithm \citep{Friedman2008}. (c). Directed GGIM. Solid lines indicate edges that are present in the classic signaling graph. Dashed light gray lines indicate edges that are not present in the classic signaling graph.  In graph (c), black lines indicate edges identified in the same orientation as the classic signaling graph. Dark gray lines indicate edges identified in the reverse orientation as the classic signaling graph.}\label{fig:animals}
\end{figure}

\begin{remark}
\textup{
It is noted that GGIMs inherently differ from Bayesian networks and linear Gaussian SEMs. The models in this paper arise from the steady-state characteristics of a set of equations where the \emph{rate of change} of a variable is a linear combination of the state of the remaining variables. SEMs arise from a set of equations where the \emph{state} of a variable is a linear combination of the state of parent variables. Bayesian networks and SEMs model strictly hierarchical relationships through directed edges that represent causality, whereas GGIMs model the interactions between variables. The hierarchical nature of Bayesian networks and SEMs mandates that they are represented by DAGs, while no such restriction is placed on GGIMs, which can have any general graph structure. Due to these conceptual differences we do not seek to apply notions such as Markov equivalence \citep{chickering2002}, morality \citep{lauritzen1988}, or faithfulness \citep{Zhang2003} to GGIMs. 
} \end{remark}

\subsection{Outline}
The remainder of the paper is organized as follows. Background and notation are provided in Section \ref{sex:not}. In Section \ref{sec:covar} we develop relevant theory and introduce the Gaussian Graphical Interaction Model. In Section \ref{sec:what} we define directed conditional independence between elements in a stationary Gaussian process. Section \ref{sec:det} demonstrates how learning GGIMs can be framed as a LASSO problem and provides a bound on the estimated covariance matrix relative to the MLE of the covariance matrix for the GGIM. In Section sec:sparse we discuss learning GGIMs associated with sparse unerlying directed conditional dependence relationships. We conclude with final remarks in Section \ref{sec:final}.

\section{Background and notation}\label{sex:not}
Let $\mathcal{G} = (\mathcal{V}, \mathcal{E}, A)$ be a connected, directed graph representing interactions between variables, where $\mathcal{V} = \{1,2,..., p\}$ is the set of variables (also refered to as nodes), and $\mathcal{E} \subseteq \mathcal{V} \times \mathcal{V}$ is the set of $m$ edges.  $A \in \mathbb{R}^{p \times p} $ is the adjacency matrix where element $a_{i,j}$ is the weight on edge $(i,j)$. If $(i,j) \in \mathcal{E}$ then $a_{i,j} > 0 $; otherwise $a_{i,j} = 0$.  The \emph{out-degree} of node $i$ is calculated as $d_i = \sum_{j=1}^p a_{i,j}$.  The out-degree matrix is a diagonal matrix of node out-degrees, $D = \text{diag}\{d_1, d_2, ...,d_p\}$. The associated directed \emph{Laplacian} matrix is defined as $L=D-A$.  This  corresponds to a `sensing' interpretation of node interactions where an edge from $i$ to $j$ indicates that node $i$ senses the state of node $j$. We primarily consider this interpretation of edge orientation to maintain consistency with previous research on Gaussian processes on directed graphs, e.g. \citep{Fitch2019, young20161}. The transposed Laplacian corresponds to the `sending' interpretation where an edge from $i$ to $j$ indicates that the state of node $i$ is available to node $j$. 

Throughout the paper, partitioning of a given matrix, $G$, will follow
 \begin{align}
G =  \begin{bmatrix}
G_{a,a} &G_{a,b}\\
G_{a,b}^T & G_{b,b}
 \end{bmatrix}, \label{part}
 \end{align}
where $G_{a,a} \in \mathbb{R}^{2 \times 2}$. 

The sample mean of $n$ independent and identically distributed (i.i.d.)\ observations $X^{(1)}, \dots, X^{(n)}$ from $\mathcal{N}(\mu, \Sigma)$,  is 
\begin{align}
\overline{X} = \frac{1}{n} \sum_{i=1}^n X^{(i)}. \nonumber
\end{align}
The sample covariance matrix is 
\begin{align}
S = \frac{1}{n} \sum_{i=1}^n(X^{(i)} - \overline{X}) (X^{(i)} - \overline{X})^T, \nonumber
\end{align}
and is equal to the Maximum Likelihood Estimate (MLE) of the covariance matrix.

In general the Laplacian matrix, covariance matrix, and sample covariance matrix are all assumed to be full rank and corresponding to a weakly connected graph. The limiting case of GGIMs associated with a positive semi-definite covariance matrix is considered in Appendix \ref{app1}. In Section \ref{sec:det} we assume that the number of observations, $n$, is greater than the number of variables, $p$. However, we note that the corresponding LASSO problems can nevertheless be solved in the case $n \leq p$. 

The $l_1$- norm of a matrix is defined as the sum of absolute values of elements in the matrix and the $l_{\infty}$-norm of a matrix is defined as the maximum absolute value of elements in the matrix. Recall that a skew-symmetric matrix, $\kappa$, satisfies $\kappa^T = -\kappa$. We denote the set of all $\mathbb{R}^{p \times p}$ skew-symmetric matrices by $\text{so}(p)$.

\section{Gaussian graphical interaction models} \label{sec:covar}
We begin this section by reviewing the relationship between conditional independence and the inverse covariance matrix for a random vector $X$ with Gaussian distribution $\mathcal{N}(0,\Sigma)$. Then, we provide a lemma relating the inverse covariance matrix of $X$ to the undirected interaction graph for a stationary Gaussian process $\mathbf{x}$ with distribution $\mathcal{N}(0,\Sigma)$. Next, we discuss how the Gaussian process viewpoint allows us to consider directed interactions between variables. The section concludes with the definition for a new directed graphical model, the Gaussian graphical interaction model. 

There are multiple approaches for demonstrating that the sparsity pattern of the precision matrix is representative of the underlying undirected independence graph structure. For example, by using Schur complements it is possible to show that zeros in the precision matrix correspond to conditional independence relationships \citep{Anderson2003}. Consider a covariance matrix partitioned as in (\ref{part}),
 where $\Sigma_{a,a} \in \mathbb{R}^{2 \times 2}$ contains elements indexed by $1$, $2$, and let $P = \Sigma^{-1}$. Then, the submatrix $P_{a,a}$ can be written as 
 \begin{align}
 P_{a,a} = (\Sigma_{a,a} - \Sigma_{a,b}(\Sigma_{b,b})^{-1} \Sigma_{a,b}^T)^{-1} = (\Sigma_{a|b})^{-1}. \nonumber 
 \end{align}
  Therefore $P_{a,a}$ is the inverse of the conditional covariance, $\Sigma_{a|b}$, and two-by-two matrix inversion, $P_{1,2} = 0$ implies that  $\Sigma_{1,2|b} = 0$.
In other words, variables indexed by $i$ and $j$ are conditionally independent given the remaining variables if and only if  $P_{i,j} = \Sigma^{-1}_{i,j} = 0$.

We now demonstrate that the same relationship between the precision matrix and the underlying graph can be established from dynamic systems perspective. Rather than think of the $n$ i.i.d.\ samples as drawn from a multivariate normal distribution, they can be thought of as belonging to a stationary Gaussian process. Recall, for a stochastic process to be stationary its unconditional joint probability distribution is unchanged under time shifts. This further implies that the mean and covariance do not change over time.  Before stating the lemma regarding the relationship between inverse covariance matrix of a random vector and the interaction graphs of stationary Gaussian processes, we first provide a theorem from \citep{arnold1992}. In the following, $d\mathbf{W}$ represents increments drawn from independent Wiener processes.
\begin{theorem} (From \citep{arnold1992}) The solution of the equation
\begin{align}
d\mathbf{x} = (M(t) + a(t))\mathbf{x} dt + B(t)d\mathbf{W}, \;\;\mathbf{x}_{t_0} = c, \label{dyn1}
\end{align}
is a stationary Gaussian process if $M(t) \equiv M$, $a(t) \equiv 0$, $B(t) \equiv B$, the eigenvalues of $M$ have negative real parts, and $c$ is $\mathcal{R}(0,K)$-distributed, where $\Sigma$, the steady-state covariance matrix, is 
\begin{align}
\Sigma = \int_0^{\infty}e^{Mt}BB^Te^{M^Tt}dt, \nonumber
\end{align}
or equivalently the solution of the Lyapunov equation 
\begin{align}M \Sigma +\Sigma M^T = -BB^T. \nonumber
\end{align} 
\end{theorem}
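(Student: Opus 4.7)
The plan is to exploit the constant-coefficient structure under the hypotheses to write the solution explicitly, and then verify that the resulting process has shift-invariant, jointly Gaussian finite-dimensional distributions. With $M(t)\equiv M$, $a(t)\equiv 0$, $B(t)\equiv B$, the SDE collapses to the linear equation $d\mathbf{x} = M\mathbf{x}\,dt + B\,d\mathbf{W}$, and variation of constants gives the unique strong solution $\mathbf{x}(t) = e^{M(t-t_0)}c + \int_{t_0}^{t} e^{M(t-s)}B\,d\mathbf{W}(s)$. Since $c$ is Gaussian and independent of $\mathbf{W}$, and since an It\^o integral of a deterministic matrix-valued kernel is a Gaussian vector, every finite-dimensional distribution of $\mathbf{x}$ is jointly Gaussian. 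Stationarity therefore reduces to showing that the mean function is constant and the covariance kernel depends only on the time lag.

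The mean is identically zero because $\mathbb{E}[c]=0$ and the It\^o integral has vanishing expectation. For the one-time covariance, It\^o's isometry together with independence of $c$ and $\mathbf{W}$ yields $\mathrm{Cov}(\mathbf{x}(t)) = e^{M(t-t_0)} K e^{M^T(t-t_0)} + \int_{0}^{t-t_0} e^{Mu} BB^T e^{M^T u}\,du$. Because every eigenvalue of $M$ has strictly negative real part, $\|e^{Mu}\|$ decays exponentially, so the improper matrix integral $\Sigma := \int_0^\infty e^{Mu} BB^T e^{M^T u}\,du$ converges absolutely. Taking $K=\Sigma$ and shifting the integration variable in the first term by $t-t_0$ exactly fills in the missing tail of the integral, giving $\mathrm{Cov}(\mathbf{x}(t)) = \Sigma$ for every $t \ge t_0$. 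Repeating the computation at two distinct times produces $\mathrm{Cov}(\mathbf{x}(t),\mathbf{x}(s)) = e^{M(t-s)}\Sigma$ whenever $t\ge s$, which depends only on the lag; combined with joint Gaussianity, this upgrades to strict stationarity of all finite-dimensional marginals.

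The Lyapunov characterization then follows by differentiating the integrand: $\frac{d}{du}\bigl(e^{Mu}BB^T e^{M^T u}\bigr) = M e^{Mu}BB^T e^{M^T u} + e^{Mu}BB^T e^{M^T u} M^T$. Integrating from $0$ to $\infty$ and using $\lim_{u\to\infty} e^{Mu}BB^T e^{M^T u} = 0$ (once more a consequence of the eigenvalue condition) produces $M\Sigma + \Sigma M^T = -BB^T$ directly.

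The hard part, as I see it, is the uniformity analysis in the middle step: establishing absolute convergence of the improper matrix integral and justifying the change-of-variables identification of $\mathrm{Cov}(\mathbf{x}(t))$ with $\Sigma$ for every $t$ rather than only asymptotically. This rests on an exponential norm estimate of the form $\|e^{Mu}\| \le C e^{-\alpha u}$ extracted from the Jordan form of $M$; once that bound is in hand, stationarity of the full process and the Lyapunov characterization are formal consequences of the variation-of-constants representation.
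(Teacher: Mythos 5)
Your argument is correct: the variation-of-constants representation, the Wiener-integral Gaussianity, the identification $K=\Sigma$ that makes the one-time covariance constant, the lag-only two-time covariance, and the differentiate-and-integrate derivation of $M\Sigma+\Sigma M^T=-BB^T$ are all sound, and the exponential bound $\|e^{Mu}\|\le Ce^{-\alpha u}$ from the Hurwitz condition does carry the convergence and tail-vanishing steps you flag as the crux. Note, however, that the paper itself offers no proof of this statement --- it is quoted verbatim from Arnold (1992) as imported background --- so there is no in-paper argument to compare against; yours is the standard textbook derivation of the stationary Ornstein--Uhlenbeck law and would serve as a self-contained justification if one were wanted.
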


\begin{lemma}
Consider a stationary Gaussian process $\mathbf{x}  \in \mathbb{R}^p$ with multivariate Gaussian distribution $\mathcal{N}(0,\Sigma)$ and a random vector $X \in \mathbb{R}^p$ with multivariate Gaussian distribution $\mathcal{N}(0,\Sigma)$. Then, up to a scaling, the unique symmetric Laplacian matrix corresponding to the steady-state undirected interaction graph associated with $\mathbf{x}$ is equal to the inverse covariance matrix associated with the random vector $X$.
\end{lemma}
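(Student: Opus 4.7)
The plan is to leverage the theorem quoted from \citep{arnold1992} together with the Lyapunov equation to convert the steady-state identity into an explicit algebraic relation between the symmetric Laplacian $L$ and the covariance $\Sigma$. First, since the interaction graph associated with $\mathbf{x}$ is undirected, its adjacency matrix is symmetric, so $L = D - A$ is symmetric; moreover, by the standing assumptions of the section, $L$ is full rank and positive definite. The natural linear diffusion-type dynamics on such a graph, consistent with the ``sensing'' interpretation, is
\begin{align}
d\mathbf{x} = -L\mathbf{x}\,dt + \sigma I\,d\mathbf{W}, \nonumber
\end{align}
with isotropic driving noise. I would adopt this as the canonical dynamics, since allowing an anisotropic $B$ would entangle noise covariance with graph structure and obscure the intended relationship; this is precisely where the ``up to a scaling'' qualifier will enter.

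Next, I would apply the quoted theorem with $M = -L$ and $B = \sigma I$. Because $L$ is symmetric positive definite, the eigenvalues of $M$ are strictly negative real numbers, placing us inside the hypotheses of the theorem. Substituting into the Lyapunov equation and using $L^T = L$ yields
\begin{align}
L\Sigma + \Sigma L = \sigma^2 I. \nonumber
\end{align}
I would then verify by direct substitution that $\Sigma = \tfrac{\sigma^2}{2}L^{-1}$ solves this equation: since $L$ commutes with its own inverse, both $L\Sigma$ and $\Sigma L$ equal $\tfrac{\sigma^2}{2}I$. Uniqueness follows from the standard fact that $MY + YM^T = Q$ admits a unique solution whenever $M$ is stable. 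Rearranging gives $L = \tfrac{2}{\sigma^2}\Sigma^{-1}$, which is exactly the claim that the symmetric Laplacian coincides with the precision matrix up to a scalar factor.

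The main obstacle is philosophical rather than computational: one must justify that the ``interaction graph associated with $\mathbf{x}$'' is canonically captured by the isotropically driven dynamics above. Any other choice of noise level simply rescales $\Sigma$, and any global rescaling $cL$ of the Laplacian produces covariance $\Sigma/c$, so the relationship is inherently only defined up to a positive multiplicative constant. Uniqueness of the symmetric Laplacian representing a given undirected weighted graph (under the convention $L = D - A$) is immediate from the one-to-one correspondence between the off-diagonal entries and the edge weights, and the positive-definiteness required for both stability of $M$ and invertibility of $\Sigma$ is inherited from the standing full-rank assumptions, so no additional machinery is needed.
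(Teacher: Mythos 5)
Your proposal is correct and follows essentially the same route as the paper: substitute $M=-L$, $B=\sigma I$ into the quoted theorem, obtain the Lyapunov equation $L\Sigma+\Sigma L^T=\sigma^2 I$, and observe that for symmetric $L$ the unique solution is $\Sigma=\tfrac{\sigma^2}{2}L^{-1}$, i.e.\ $L$ is the precision matrix up to scaling. The only differences are cosmetic — you carry a general $\sigma$ where the paper normalizes $\sigma^2=2$ without loss of generality, and you spell out the uniqueness and stability justifications slightly more explicitly.
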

\begin{proof}

Consider equation (\ref{dyn1}) in the context of a stochastic process on a graph by replacing $M$ with the negative graph Laplacian $-L$ and taking $B= \sigma I_p$.  The resulting equation, 
\begin{align}
d\mathbf{x} = -L\mathbf{x} dt + \sigma d \mathbf{W}, \label{dyn2}
\end{align} represents a noisy diffusion process where each node $i$ in the graph averages its state, $\mathbf{x}_i$, with the state of its neighbors, subject to noise in the state of each node with standard deviation $\sigma$. Without loss of generality, let $\sigma^2 = 2$.  As the eigenvalues of $-L$ have negative real parts, the solution $\mathbf{x}$ is a stationary process. The corresponding steady-state covariance matrix is the unique solution to
\begin{align}
L \Sigma + \Sigma L^T = 2 I_p. \label{lyap1}
\end{align}
When restricting to only undirected graphs ($L$  symmetric), the unique solution $\Sigma =  L^{-1}$ satisfies (\ref{lyap1}). \end{proof}

Therefore, we have demonstrated that for undirected graphical models, the stationary Gaussian diffusion process represented by the steady-state dynamics of $\mathbf{x}$ recovers the relationship between the precision matrix and graph structure, up to a scaling. More specifically, by the relationship $L = \Sigma^{-1}$, the non-zero elements of the inverse covariance matrix indicate both conditional dependencies and edges in the underlying interaction graph.

Returning to the stationary Gaussian process (\ref{dyn2}), we can ask what happens when $L$ represents a directed graph and is no longer symmetric. In this case, the solution $\Sigma$ to (\ref{lyap1}) still exists, is unique, and is symmetric,  but is no longer a scale of the inverse Laplacian matrix. In the reverse direction, given a covariance matrix, $\Sigma$, a corresponding $L$ is not unique. We formalize the relationship between a $L$ and $\Sigma$ in the following proposition, which follows from considering the relationships provided for rank deficient matrices in \citep{Fitch2019} in the context of full-rank $L$ and $\Sigma$.
\begin{proposition} \label{prop1}
Consider a stationary Gaussian process $\mathbf{x}  \in \mathbb{R}^p$ with multivariate Gaussian distribution $\mathcal{N}(0,\Sigma)$. Then the Laplacian matrices corresponding to the family of steady-state equivalent interaction graphs between the $p$ variables of $\mathbf{x}$ are given as a function of $\kappa \in \mathbb{R}^{p \times p}$ skew-symmetric matrices, by
\begin{align}
L(\kappa) = (I_p + \kappa)\Sigma^{-1}. \label{lk}
\end{align}
\end{proposition}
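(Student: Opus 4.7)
The plan is to reduce the proposition to the observation that the Lyapunov equation from the preceding lemma, $L\Sigma + \Sigma L^T = 2I_p$, is equivalent to the skew-symmetry of a single matrix built from $L$ and $\Sigma$. Since $\Sigma$ is assumed full rank, the substitution $\kappa := L\Sigma - I_p$ is an invertible change of variables between $L$ and $\kappa$, with inverse $L = (I_p + \kappa)\Sigma^{-1}$. All that remains is to show that the Lyapunov equation is equivalent to $\kappa \in \mathrm{so}(p)$.

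First I would establish the ``every $\kappa$ works'' direction. Given skew-symmetric $\kappa$, define $L(\kappa) = (I_p + \kappa)\Sigma^{-1}$ and compute directly, using $\Sigma = \Sigma^T$ and $\kappa^T = -\kappa$,
\begin{align}
L(\kappa)\Sigma + \Sigma L(\kappa)^T
  = (I_p + \kappa) + \Sigma\,\Sigma^{-1}(I_p - \kappa)
  = 2I_p, \nonumber
\end{align}
so $L(\kappa)$ solves the Lyapunov equation and thus indexes a valid steady-state-equivalent interaction graph for $\mathbf{x}$.

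For the converse direction, I would take an arbitrary Laplacian $L$ whose stationary Gaussian process on the graph has covariance $\Sigma$, and show that the associated $\kappa := L\Sigma - I_p$ is necessarily skew-symmetric. This is immediate from
\begin{align}
\kappa + \kappa^T = (L\Sigma - I_p) + (\Sigma L^T - I_p) = (L\Sigma + \Sigma L^T) - 2I_p = 0, \nonumber
\end{align}
where the last equality is the Lyapunov equation supplied by the lemma. Rearranging $\kappa = L\Sigma - I_p$ yields the stated form $L = (I_p + \kappa)\Sigma^{-1}$, completing the parametrization.

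There is no substantive obstacle here: the entire proposition is a clean linear-algebra identity, and the only things one must justify are (i) invertibility of $\Sigma$, which is explicit in the paper's standing assumptions for this section, and (ii) the interpretation that each $L(\kappa)$ constructed this way counts as a Laplacian in the generalized, possibly-directed, full-rank sense used throughout Sections \ref{sec:covar} and \ref{sec:ggcem}. The conceptual content worth highlighting in the write-up is simply that the ambiguity in recovering a directed interaction graph from the steady-state covariance is exactly captured by the skew-symmetric degrees of freedom in $\kappa$.
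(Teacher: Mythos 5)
Your proposal is correct, and it is actually more complete than the paper's own argument. The forward direction coincides exactly with the paper's proof: direct substitution of $L(\kappa) = (I_p + \kappa)\Sigma^{-1}$ into (\ref{lyap1}) and cancellation of $\kappa$ against $-\kappa$. Where you diverge is the converse. The paper does not prove that every steady-state equivalent Laplacian has the form (\ref{lk}); it simply asserts this and cites \citet{Barnett1967}. You instead give a two-line elementary argument: set $\kappa := L\Sigma - I_p$ and observe that $\kappa + \kappa^T = L\Sigma + \Sigma L^T - 2I_p = 0$ by the Lyapunov equation, so $\kappa$ is skew-symmetric and $L = (I_p + \kappa)\Sigma^{-1}$ follows from invertibility of $\Sigma$. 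This makes the parametrization a genuine bijection between solutions $L$ of (\ref{lyap1}) and elements of $\text{so}(p)$, self-contained and without appeal to external results on Lyapunov/Sylvester equations. The one implicit step worth making explicit in a write-up is that ``steady-state equivalent'' is being identified with ``satisfies (\ref{lyap1})'', which is justified by the uniqueness of the stationary covariance for a stable $-L$; given the paper's framing this is exactly the intended meaning, so your argument stands.
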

\begin{proof}
 Direct substitution yields 
\begin{align}
L \Sigma + \Sigma L^T = (I_p + \kappa)\Sigma^{-1}\Sigma + \Sigma \Sigma^{-1} (I_p - \kappa) = I_p + \kappa +I_p - \kappa = 2 I_p. \nonumber
\end{align} It can be shown (i.e. \citep{Barnett1967}) that there do not exist any Laplacian matrices corresponding steady-state equivalent interaction graphs outside of those that can be decomposed as (\ref{lk}). \end{proof}
This leads to the definition of the Gaussian Graphical Interaction Model:
\begin{definition}[Gaussian Graphical Interaction Model (GGIM)] \label{def1} Consider a stationary Gaussian process with multivariate distribution $\mathcal{N}(0,\Sigma)$. Then a Gaussian Graphical Interaction Model for the stationary process is the graph induced by a Laplacian matrix $\hat{L}$ where
\begin{align} 
&\hat{L} = (I_p + \kappa)\Sigma^{-1}, \;\;\kappa \in \text{so}(p).
\end{align}
\end{definition}

\begin{remark} 
\textup{Similarly to most structure learning problems, learning a GGIM is inherently indeterminate, as there are an infinite number of matrices $\hat{L}$ that can satisfy the definition of a GGIM for a given stationary Gaussian process. In practice, we will add a regularization term penalizing the $l_1$-norm of $\hat{L}$. While constructing a directed interaction model based on a symmetric covariance matrix may seem at first counter intuitive, we note that for the $l_1$-norm, finding the sparsest directed matrix $L$ that satisfies (\ref{lyap1}) inherently provides a generalization of a very natural assumption on edge directions. To illustrate, consider a two variable example consistent with the sending interpretation of variable interaction (corresponding to $L^T$). If $\Sigma_{1,1} > \Sigma_{2,2}$ then the resulting $l_1$-norm sparsest GGIM will contain an edge from $2$ to $1$. In a sense, this assumes the variable with less uncertainty passes information to the variable with higher uncertainty. The directed edge in this example represents the sparsest interaction that could lead to the covariance matrix $\Sigma$.} \end{remark}

 \section{Directed conditional dependence in graphical models}\label{sec:what}
  In this section, we show that the submatrix equation from (\ref{lyap1}) pertaining to the conditional covariance matrix of two nodes allows us to write the off-diagonal elements of $P$ in terms of two directed components. This then naturally leads to a notion of directed conditional dependence. 

We begin by rewriting equation (\ref{lyap1}) as 
\begin{align}
\Sigma^{-1} L + L^T \Sigma^{-1} = 2(\Sigma^{-1})^2. \label{lyapi}
\end{align}
Let $\Sigma$ and $L$ be partitioned according to (\ref{part}), where $\Sigma_{a,a}, \; L_{a,a} \in \mathbb{R}^{2 \times 2}$ are indexed by $1,2$, and express $\Sigma^{-1}$ as 
 \begin{align}
 \Sigma^{-1} = P = \begin{bmatrix}
(\Sigma_{a|b})^{-1} & - (\Sigma_{a|b})^{-1}\Sigma_{a,b} (\Sigma_{b,b})^{-1} \\
-(\Sigma_{b,b})^{-1}\Sigma_{a,b}(\Sigma_{a|b})^{-1} &  (\Sigma_{b,b})^{-1}+(\Sigma_{b,b})^{-1}\Sigma_{a,b}(\Sigma_{a|b})^{-1}\Sigma_{a,b} (\Sigma_{b,b})^{-1} 
\end{bmatrix}. \nonumber
\end{align}
Then, the top left submatrix of (\ref{lyapi}) can be expanded as
\begin{align}
&(\Sigma_{a|b})^{-1} L_{a,a} -(\Sigma_{a|b})^{-1}\Sigma_{a,b} (\Sigma_{b,b})^{-1}L_{b,a} + L_{a,a}^T(\Sigma_{a|b})^{-1} - L_{a,b}^T (\Sigma_{b,b})^{-1}\Sigma_{a,b}(\Sigma_{a|b})^{-1} \nonumber \\&= 2(\Sigma_{a|b})^{-2} + 2( \Sigma_{a|b})^{-1}\Sigma_{a,b} (\Sigma_{b,b})^{-1}(\Sigma_{b,b})^{-1}\Sigma_{b,a}(\Sigma_{a|b})^{-1}. \nonumber
\end{align}
Multiplying both sides by $\Sigma_{a|b}$ yields
\begin{align}
&(L_{a,a} -\Sigma_{a,b} (\Sigma_{b,b})^{-1}L_{b,a})\Sigma_{a|b} + \Sigma_{a|b}(L_{a,a}^T - L_{a,b}^T (\Sigma_{b,b})^{-1}\Sigma_{a,b}) = 2I +2\Sigma_{a,b} (\Sigma_{b,b})^{-1}(\Sigma_{b,b})^{-1}\Sigma_{a,b}. \nonumber
\end{align}
Substituting $P_{a,b} =  - (\Sigma_{a|b})^{-1}\Sigma_{a,b} (\Sigma_{b,b})^{-1}$ and rearranging gives
\begin{align}
&\Big(L_{a,a} + \Sigma_{a,b}(\Sigma_{b,b})^{-1}(  P_{b,a} - L_{b,a})\Big) \Sigma_{a|b} 
+ \Sigma_{a|b}\Big(L_{a,a} + \Sigma_{a,b}(\Sigma_{b,b})^{-1}( P_{b,a} - L_{b,a})\Big)^T = 2I. \label{lyap3}
\end{align}
Let $\Phi_{a,a} = L_{a,a} + \Sigma_{a,b}(\Sigma_{b,b})^{-1}(  P_{b,a} - L_{b,a}) $. Then equation (\ref{lyap3}) can be written as
\begin{align}
    \Phi_{a,a} \Sigma_{a|b} 
+ \Sigma_{a|b}\Phi_{a,a} ^T = 2I. \label{lyap4}
\end{align}

From \citet{Barnett1967}, and analogously to (\ref{lk}),  $\Phi_{a,a}$ can also be written  as a function of a skew-symmetric matrix such that $\Sigma_{a|b}$ is a solution to the Lyapunov equation (\ref{lyap4})
\begin{align}
\Phi_{a,a} = L_{a,a} + \Sigma_{a,b}(\Sigma_{b,b})^{-1}(  P_{b,a} - L_{b,a}) = (I_2 + \Gamma)\Sigma_{a|b}^{-1} =   (I_2 + \Gamma)P_{a,a}, \label{gamma1}
\end{align}
where $\Gamma \in \mathbb{R}^{2 \times 2}$ is a skew symmetric matrix.  $\Gamma$ has only one free variable, 
\begin{align}
\Gamma_{1,2} = - \Gamma_{2,1} =  \frac{\Phi_{k,l} - \Phi_{l,k}}{P_{k,k} + P_{l,l}}. \label{gamma2}
\end{align}
Furthermore, combining (\ref{gamma1}) and (\ref{gamma2}) we can explicitly solve for the relationship between off-diagonal elements of the precision matrix $P_{k,l}$ with $\Phi_{k,l}$ and $\Phi_{l,k}$. That is,
\begin{align}
P_{k,l} = \Phi_{k,l}\frac{P_{k,k}}{P_{k,k}+P_{l,l}} +\Phi_{l,k}\frac{P_{l,l}}{P_{k,k}+P_{l,l}}. \label{prec_to_directed}
\end{align}
If two nodes, $k$, $l$ are conditionally independent then $P_{k,l} = 0$. Since $P_{k,k}, P_{l,l} > 0 $ (by positive definiteness of $\Sigma$), we have from equation (\ref{prec_to_directed}) that $k$ and $l$ will be conditionally independent if and only if  $\frac{\Phi_{k,l}}{P_{l,l}} = -\frac{\Phi_{l,k}}{P_{k,k}}$. Equation  (\ref{prec_to_directed}) demonstrates that we can essentially split undirected edges in an inverse covariance model into two directed components. This result is surprisingly elegant in its implication that there is a separation of conditional dependence into two directed parts. This relationship between $\Phi_{k,l}$, $\Phi_{l,k}$, and $P_{k,l}$ motivates the definition of directed conditional independence. 

\begin{definition}\emph{\textbf{Directed conditional independence of two elements in a stationary Gaussian process}} \\
Consider a stationary Gaussian process $\mathbf{x} \in \mathbb{R}^p$  with multivariate distribution $\mathcal{N}(0,\Sigma)$ over the the graph $\mathcal{G} = (\mathcal{V}, \mathcal{E}, A)$ with associated Laplacian matrix $L$ and precision matrix $P = \Sigma^{-1}$.  Additionally, consider two variables $k$, $l$, with $a=\{k,l\}$ and the set $b = \{\mathcal{V} \backslash \{k,l\} \}$. Let $\Phi_{a,a} = L_{a,a} + \Sigma_{a,b}(\Sigma_{b,b})^{-1}(  P_{b,a} - L_{b,a}) $. Then, $k$ and $l$ are \emph{conditionally independent in the direction $k$ to $l$} when $\Phi_{l,k} = 0$. Similarly, $k$ and $l$ are \emph{conditionally independent in the direction $l$ to $k$} when $\Phi_{k,l} = 0$.
\end{definition}

 To provide more intuition for the definition of directed conditional independence, we can substitute (\ref{prec_to_directed}) into the precision sub-matrix corresponding to the conditional distribution, yielding
\begin{align}
P_{a,a} & = \begin{bmatrix}
P_{k,k} & \frac{\Phi_{k,l}P_{k,k} +\Phi_{l,k}P_{l,l}}{P_{k,k}+P_{l,l}} \\ \frac{\Phi_{k,l}P_{k,k} +\Phi_{l,k}P_{l,l}}{P_{k,k}+P_{l,l}} & P_{l,l} 
\end{bmatrix}. \nonumber 
\end{align}
Let $0 < \{ \alpha_{k}, \; \alpha_l \} < 1$. 
 Then $P_{a,a} $ can be decomposed as 
\begin{align}
P_{a,a}  &=  P_{a,a}^{(l \rightarrow k)} + P_{a,a}^{(k \rightarrow l)} = \frac{P_{k,k}}{P_{k,k}+P_{l,l}} \begin{bmatrix}
 P_{k,k} & \Phi_{k,l} \\ \Phi_{k,l} &  P_{l,l} \end{bmatrix}
+ \frac{P_{l,l}}{P_{k,k}+P_{l,l}}  \begin{bmatrix}
 P_{k,k}  & \Phi_{l,k} \\ 
 \Phi_{l,k}  &  P_{l,l}
\end{bmatrix}. \label{psplit}
\end{align}
We call the decomposition \emph{valid} if $ P_{a,a}^{(l \rightarrow k)}$ and $P_{a,a}^{(k \rightarrow l)}$ are positive definite. 
A valid decomposition of $P_{a,a}$ permits $\Sigma_{a|b}$ to be written as the covariance matrix associated with the product of two Gaussian distributions: $\mathcal{N}(0,\Sigma^{(l \rightarrow k)}_{a | b})$, and $\mathcal{N}(0,\Sigma^{(k \rightarrow l)}_{a | b})$. That is
\begin{align}
\Sigma_{a|b} = \Big({\Sigma^{(l \rightarrow k)}_{a | b}} ^{-1} + {\Sigma^{(k \rightarrow l)}_{a | b}} ^{-1} \Big)^{-1}, \label{cond1}
\end{align}
where $\Sigma^{(k \rightarrow l)}_{a | b}$ represents the component of the conditional covariance $\Sigma_{a|b}$ arising from the conditional dependence of $l$ on $k$.

Equation (\ref{cond1}) is equivalent to (\ref{psplit}), where $ P_{a,a}^{(k \rightarrow l)} = {\Sigma^{(k \rightarrow l)}_{a | b}} ^{-1}$ and $ P_{a,a}^{(l \rightarrow k)} = {\Sigma^{(l \rightarrow k)}_{a | b}} ^{-1}$ are both symmetric matrices. By two-by-two matrix inversion we have that $\Sigma^{(k \rightarrow l)}_{k,l | b} = 0$ if and only if $ \Phi_{l,k} = 0$.  Similarly in the opposite direction $\Sigma^{(l \rightarrow k)}_{k,l | b} = 0$ if and only if $\Phi_{k,l} = 0$. 

\subsection{Directed conditional independence and interaction graphs}\label{sec:ciandig}
With the exceptions of the undirected case $L = P$, we see that $\Phi_{k,l} = 0$ does not in general imply that $L_{k,l} = 0$, and vice versa. Consequently, it is possible to have an edge $(k,l)$ in the interaction graph despite conditional independence in direction $k \rightarrow l$. Similarly, it is possible to have conditional dependence in direction $k \rightarrow l$, but no edge $(k,l)$ in the associated GGIM.

To illustrate, consider Figure \ref{three_node}, which illustrates three simple sparsity structures for three node GGIMs (column label $L$) with the corresponding conditional independence graphs (column $P$) and directed conditional independence graphs (column $\Phi$). All diagonal elements in $\Phi$ are assumed to be non-positive ($\Phi_{k,l} \leq 0$), which is consistent with non-negative conditional dependencies. Conditions on $L$ for increased sparsity in $\Phi$ are listed in the fourth column. Figure \ref{three_node} omits sparsity structures for $P$ and $\Phi$ that are equivalent under a relabeling of nodes. 
\begin{figure}[h!]
\begin{center}
\centerline{\includegraphics[width=.66 \columnwidth]{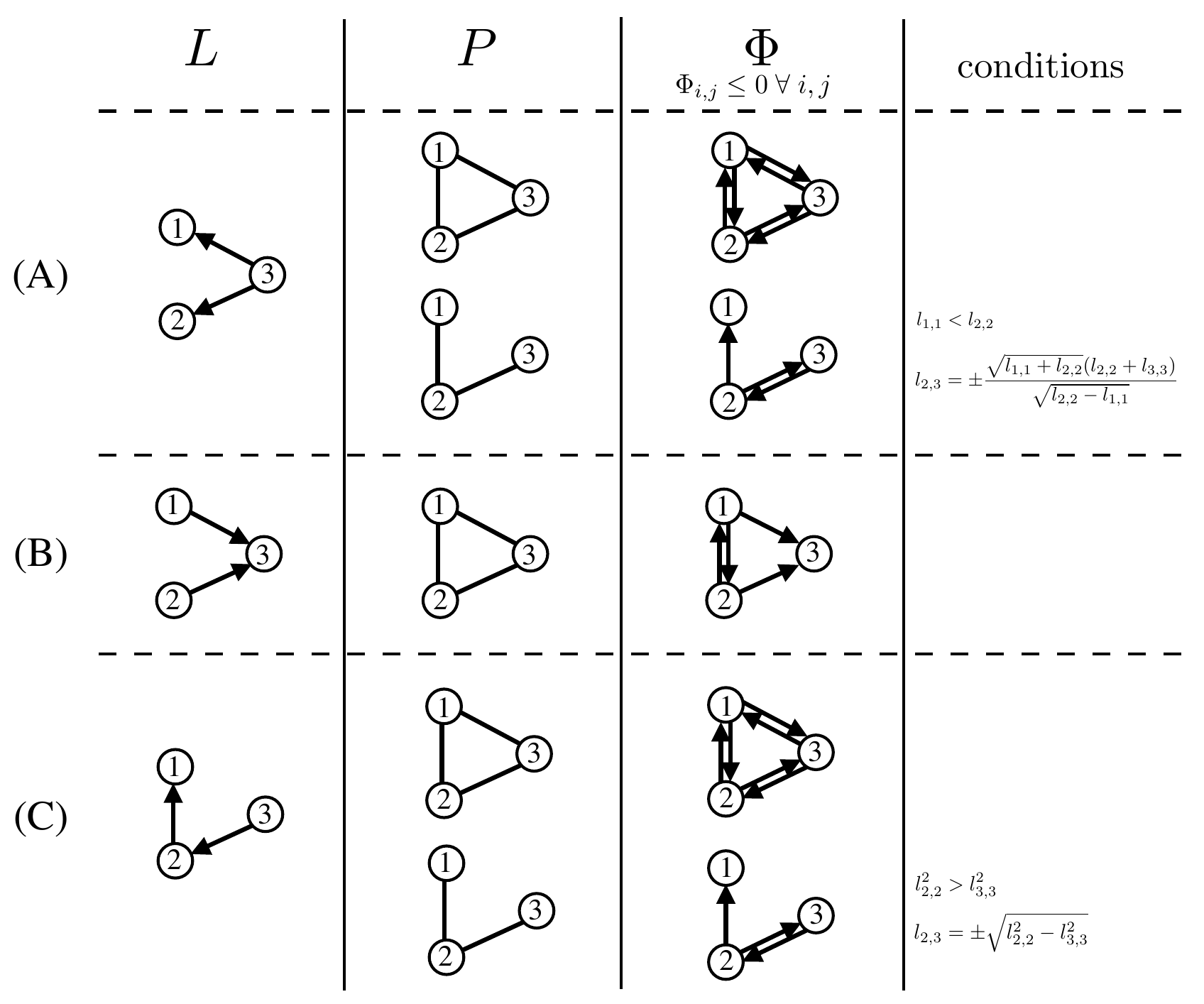}}
\caption{Examples of three node directed interaction graphs and corresponding conditional independence graphs and directed conditional independence graphs.}
\label{three_node}
\end{center}
\end{figure}

Despite being simple examples, the graphs in Figure \ref{three_node} provide a number of insights. The first, and perhaps least intuitive, is that the sparsity structure (A) has no set of edge weightings that yield conditional independence between the two child nodes. This is an example of how a Gaussian process induces coupling over time that leads to conditional dependence, even when there is no instantaneous connection between two nodes. Sparsity structure (B) demonstrates an interaction graph with a collider structure. As expected, the parents of node 3 are conditionally dependent. What we also see is that there are no edge weights on $L$ for which the parent nodes 1 or 2 will be directed conditionally dependent on the child node. 

Unlike Bayesian models, the relationships between $L$ and $\Phi$ in Figure \ref{three_node} can not be applied locally to directed triplets in larger graphs. Though inconvenient, this instinctively holds with our understanding of complex dynamical systems. In general, the relationship between two elements influences the entire system, rather than only their local neighborhood. 

\begin{remark} \textup{We note that the graphical models of directed conditional dependence considered in this paper are not identical to a composition of all two node projection graphs (or marginalization models) in \citet{Varando2020}. For example, in Figure \ref{three_node} (B), the composition of pairwise projection graphs is identical to the original sparsity structure and does not induce edges between 1 and 2. The projection graphs in \citet{Varando2020}, can be informally described in the following way. Consider a Gaussian process on a graph with associated steady-state covariance matrix $\Sigma$. Then, for a subset of nodes $c$, find the graph over the nodes in $c$ for which the steady-state covariance matrix is $\Sigma_{c,c}$. Therefore, when considering the limiting case of two node projection graphs, those in \citet{Varando2020} can be thought of as graphs of mutual dependence, providing a complementary picture to the conditional dependence graphs discussed here. }
\end{remark}


\section{Learning sparse GGIMs} \label{sec:det}

Typically the true covariance matrix, $\Sigma$ of a Gaussian process is unknown, and therefore the graphical models must be estimated from a set of observations. Informally, the optimization problems associated with learning GGIMs are stated as: Given observations from a stationary Gaussian process, jointly estimate $\hat{\Sigma}$ and $\hat{L}$ ($\hat{P}$) that yield the sparsest directed GGIM while sufficiently fitting the sample data. In the following section we formulate the GGIM learning problems mathematically and provide a bound on the covariance matrix estimate in the case of GGIM learning. 

To emphasize the theoretical continuity between GGIMs, and inverse covariance matrices, we adopt the $l_1$-norm sparsity objective that is often included in inverse covariance estimation. Naturally, there is no guarantee that the $l_1$-norm is truly the best penalization for all real world data sets. One could consider learning GGIMs with other sparsity promoting terms, however, this is outside the scope of this work.

\subsection{Sparse GGIM learning} \label{sec:ggiml}
Consider $n$ observations $\mathbf{x}^{(1)}, \dots, \mathbf{x}^{(n)}$ with sample covariance $S$ from a stationary Gaussian process with multivariate distribution $\mathcal{N}(0,\Sigma)$ . A GGIM estimate without any constraints on sparsity is a graph with Laplacian matrix $L$ for which (\ref{lyap1}) is satisfied with the maximum likelihood estimate (MLE) of the covariance matrix, $S$. That is, 
\begin{align}
LS + S L^T = 2I. \label{lyap7}
\end{align}
Equation (\ref{lyap7}) can be written in the form 
$
H \mathbf{z} = \mathbf{f}, 
$
 where $\mathbf{z} = \text{vec}(L)$, $\mathbf{f} = \text{vec}(2I)$ and $H \in \mathbb{R}^{1/2(p^2 +p)\times p^2}$. 
 
A sparse GGIM estimate is a graph induced by the the solution vector $\hat{\mathbf{z}}$ which minimizes a norm on the error term $\mathbf{f} - H \hat{\mathbf{z}}$, subject to sparsity penalization. Here, we consider the squared $l_2$-norm on the error and a sparsity penalization by way of the $l_1$-norm on $\hat{\mathbf{z}}$. This leads to the LASSO problem
 \begin{align}
 \hat{\mathbf{z}} = \min_{\mathbf{z}} \| \mathbf{f} - H \mathbf{z} \|_2^2 + \rho \|\mathbf{z} \|_1,\label{obj2}
 \end{align}
where $\rho \geq 0 $. The problem (\ref{obj2}) can be solved with standard algorithms such as coordinate descent \citep{Wu2008}. We omit a discussion on LASSO problems here and refer instead to \citep{Hastie2015} for an in-depth review of LASSO solution approaches, interpretations, and generalizations. 

The associated Laplacian matrix estimate $\hat{L}$ is the reshaped vector $\mathbf{\hat{z}}$ such that $\hat{L} \in \mathbb{R}^{p \times p}$. Then, the sparse GGIM estimate is the graph induced by Laplacian matrix $\hat{L}$. Note that we have suppressed dependence on $\kappa$ relative to Definition \ref{def1} because the problem of estimating $\hat{\kappa}$ and $\hat{\Sigma}$ such that $\hat{L}$ is sparse and (\ref{lyap7}) approximately holds is a more difficult problem than estimating  $\hat{L}$ directly. However, for an estimate $\hat{L}$ obtained by the solution to (\ref{obj2}), there exists a unique $\hat{\Sigma}$ and $\hat{\kappa}$ that can be calculated by solving $\hat{L} \hat{\Sigma} + \hat{\Sigma} \hat{L}^T = 2I$ and $\hat{\kappa} = \hat{L} \hat{\Sigma} - I. $ $\hat{L}$ as defined above is consistent with the sensing interpretation of node interaction, the more conventional sending interpretation (i.e. as applied in the example in Section \ref{sec:ex}) is obtained by $\hat{L}^T$.

\subsection{Sparse GGIM learning with bounded covariance}

In \citep{Friedman2008, banerjee2008}, the authors demonstrate that covariance matrix estimate that maximizes the Gaussian log-likelihood function subject to a $l_1$ penalization on the matrix inverse is a bounded additive perturbation to the unconstrained Gaussian MLE of the covariance, $S$. Here, we show that in the directed GGIM learning variant, the problem can be reformulated and solved such that the additive difference between $S$ and the estimated covariance matrix, $\hat{\Sigma}$, corresponding to a GGIM is bounded relative to the error on (\ref{lyap7}). This, in turn provides us with a bound on the difference between the penalized maximum likelihood estimates of \citep{Friedman2008, banerjee2008} and the covariance estimate corresponding to a sparse GGIM.

From (\ref{lyap1}) we have that in the unconstrained case without any penalty on the sparsity of $L$, the diagonal elements of $L$ satisfy
\begin{align}
L_{i,i} = \frac{1}{S_{i,i}}\Big(1 - \sum_{ j\neq i} L_{i,j}S_{i,j}\Big). \label{diagL}
\end{align}
Substituting (\ref{diagL}) into the equations for off-diagonal elements gives for every pair $j \neq k$,
\begin{align}
&\sum_{i \neq j} L_{j,i} \Big( \frac{S_{j,k} S_{j,i}}{S_{j,j}} - S_{k,i}\Big) + \sum_{l \neq k} L_{k,l} \Big( \frac{S_{j,k} S_{k,l}}{S_{k,k}} - S_{j,l}\Big) =  \frac{S_{j,k}}{S_{j,j}} +\frac{S_{j,k}}{S_{k,k}}.\label{const1}
\end{align}
Let $\mathbf{\zeta} \in \mathbb{R}^{(p^2-p )\times 1 }$ be the vector of off-diagonal elements of $L$. Let $\tilde{H} \in \mathbb{R}^{(p^2-p)/2 \times (p^2-p)} $ be the matrix where the elements in each row represent corresponding of $\mathbf{\zeta}$ for one constraint equation (\ref{const1}), and let $\mathbf{\beta} \in \mathbb{R}^{(p^2-p)/2 \times 1}$ be the vector of elements on the right side of each constraint equation (\ref{const1}). Then, the set of equations can be written as $\tilde{H} \mathbf{\zeta} = \mathbf{\beta}$ and the corresponding LASSO problem for the off-diagonal elements of $L$ is
 \begin{align}
 \hat{\mathbf{\zeta}} = \min_{\mathbf{\zeta}} \| \mathbf{\beta} - \tilde{H} \mathbf{\zeta} \|_2^2 + \rho \|\mathbf{\zeta} \|_1.\label{obj3}
 \end{align}
 Let $\hat{L}$ be such that the off-diagonal elements are the appropriate components of the reshaped vector  $\hat{\zeta}$ and the diagonal elements satisfy
 \begin{align}
\hat{L}_{i,i} = \frac{1}{S_{i,i}}\Big(1 - \sum_{ l\neq i} \hat{L}_{i,l}S_{i,l}\Big) + \epsilon_i, \label{diagL2}
\end{align}
where $\epsilon_i$ is chosen such that $L^{(k)} = (I \otimes \hat{L} + \hat{L}^T \otimes I)$ is strictly diagonally dominant. Note that it if $\nu_r = \max_j \sum_{i \neq j} |\hat{L}_{j,i}|$ and $\nu_c = \max_j \sum_{i \neq j} |\hat{L}_{i,j}|$, it suffices to set $\epsilon_i$ such that $L_{i,i} > \nu_r + \nu_c, \; \forall i$.

Before stating the theorem bounding the covariance matrix estimate $\hat{\Sigma}$ corresponding to the GGIM estimate $\hat{L}$, we first state a Lemma from \citep{varga1976}.
 \begin{lemma} \label{lemmasdd}
 (from \citep{varga1976}) 
Assume that $A \in \mathbb{C}^{p \times  p} $ is strictly diagonally dominant and let
$
\alpha = \min_i  \{|A_{i,i}| - \sum_{i \neq j } |A_{i,j}| \}.$
Then 
$
\|A^{-1} \|_{\infty} \leq \frac{1}{\alpha}.
$
 \end{lemma}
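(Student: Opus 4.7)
The plan is to work from the variational characterization of the induced $\ell_\infty$ operator norm, $\|A^{-1}\|_\infty = \sup_{x \ne 0} \|A^{-1} x\|_\infty / \|x\|_\infty$, and show that $\|y\|_\infty \leq \|x\|_\infty / \alpha$ whenever $y = A^{-1} x$. The assumption of strict diagonal dominance first guarantees, via the classical Levy--Desplanques theorem, that $A$ is nonsingular, so $y$ is well-defined for every $x$; I would state this as a one-line preliminary rather than re-prove it.

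The core idea is to extract a single scalar inequality from the vector identity $Ay = x$ by reading off the coordinate $i^\star$ at which $|y_i|$ attains its maximum. Separating the diagonal term from the off-diagonal ones in that row and applying the reverse triangle inequality produces a lower bound on $|x_{i^\star}|$ of the form $|y_{i^\star}|\bigl(|A_{i^\star, i^\star}| - \sum_{j \ne i^\star} |A_{i^\star, j}|\bigr)$; the factor $|y_{i^\star}|$ can be pulled out of the off-diagonal sum because $|y_j| \leq |y_{i^\star}|$ for every $j$ by choice of $i^\star$. By definition of $\alpha$, the parenthetical is at least $\alpha$, so $|x_{i^\star}| \geq \alpha\,\|y\|_\infty$, and \emph{a fortiori} $\|x\|_\infty \geq \alpha\,\|y\|_\infty$.

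Taking the supremum over nonzero $x$ then delivers the stated bound $\|A^{-1}\|_\infty \leq 1/\alpha$. I do not anticipate any real obstacle here: the whole argument reduces to a single application of the reverse triangle inequality at a carefully chosen row, and the only subtle point worth flagging is that the choice of $i^\star$ depends on $y$ (and hence on $x$), which is why the bound on $|x_{i^\star}|$ is valid uniformly in $x$ without any uniformity hypothesis on $A$. The invocation of the Levy--Desplanques nonsingularity fact is essentially the same trick applied to $Ay = 0$, so if one preferred a self-contained write-up it could be absorbed into the main argument at negligible cost.
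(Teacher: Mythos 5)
Your argument is correct and complete: reading off the row of $Ay=x$ at the index where $|y_i|$ is maximal, applying the reverse triangle inequality, and invoking Levy--Desplanques for nonsingularity is exactly the classical Ahlberg--Nilson--Varga proof. The paper itself gives no proof of this lemma --- it simply cites \citep{varga1976} --- and your write-up matches the standard argument from that source, so there is nothing to reconcile.
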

 \begin{theorem} \label{thmbound}
Let $\xi = \| \mathbf{f} - H \hat{\mathbf{\zeta}}' \|_2$ where $\hat{\zeta}' = \text{vec}(\hat{L})$. Furthermore, let $\hat{L}$ be such that $L^{(k)}$ is strictly diagonally dominant with $\alpha = \min_i \{|L^{(k)}_{i,i}| - \sum_{i \neq j } |L^{(k)}_{i,j}| \}$. Then, the covariance matrix implicitly estimated by (\ref{obj3}) and (\ref{diagL2}), $\hat{\Sigma}$, satisfying $\hat{L} \hat{\Sigma} + \hat{\Sigma} \hat{L}^T = 2 I_p$ is bounded with respect to the unconstrained MLE of the covariance matrix, $S$, by

 \begin{align}
 \| \hat{\Sigma} - S\|_{\infty} \leq \frac{\xi}{\alpha}. \nonumber
 \end{align}

 \end{theorem}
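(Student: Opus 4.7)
The plan is to propagate the residual of $\hat{L}$ on the Lyapunov equation through a vectorized Sylvester equation. First I would introduce the symmetric residual matrix $E := \hat{L}S + S\hat{L}^T - 2I_p$; symmetry follows immediately from $(\hat{L}S)^T = S\hat{L}^T$. Since $H$ encodes the $p(p+1)/2$ independent upper-triangular scalar equations in $LS + SL^T = 2I_p$, the quantity $\xi$ is precisely the $l_2$-norm of the upper-triangular half-vectorization of $E$ (up to sign). Subtracting the exact identity $\hat{L}\hat{\Sigma} + \hat{\Sigma}\hat{L}^T = 2I_p$ from the defining relation of $E$ produces the Sylvester equation
\begin{align}
\hat{L}\Delta + \Delta \hat{L}^T = -E, \qquad \Delta := \hat{\Sigma} - S. \nonumber
\end{align}

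Next I would apply the standard vectorization identity for Sylvester equations to both sides, recasting the equation as the linear system $L^{(k)}\,\text{vec}(\Delta) = -\text{vec}(E)$, where $L^{(k)}$ is exactly the Kronecker-sum matrix the theorem assumes to be strictly diagonally dominant. Strict diagonal dominance gives invertibility of $L^{(k)}$, so $\text{vec}(\Delta) = -(L^{(k)})^{-1}\text{vec}(E)$. Using the induced $l_\infty$ operator norm together with Lemma \ref{lemmasdd} applied to $L^{(k)}$ then yields
\begin{align}
\|\hat{\Sigma} - S\|_\infty = \|\text{vec}(\Delta)\|_\infty \leq \|(L^{(k)})^{-1}\|_\infty\,\|\text{vec}(E)\|_\infty \leq \frac{1}{\alpha}\,\|\text{vec}(E)\|_\infty. \nonumber
\end{align}

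Finally I would show $\|\text{vec}(E)\|_\infty \leq \xi$. Because $E$ is symmetric, the maximum absolute entry of the full vectorization equals that of the upper-triangular half-vectorization, which is in turn bounded above by the $l_2$-norm of that half-vectorization, i.e.\ by $\xi$. Chaining the inequalities delivers the claimed bound $\|\hat{\Sigma} - S\|_\infty \leq \xi/\alpha$. The only real obstacle is the bookkeeping needed to reconcile the half-vectorized formulation of $H$ and $\mathbf{f}$ with the full-vectorized Kronecker-sum operator $L^{(k)}$ used in conjunction with Lemma \ref{lemmasdd}; the symmetry of $E$ is precisely what lets the half-vec $l_2$ residual $\xi$ dominate the full-vec $l_\infty$ residual and thus makes the two viewpoints compatible.
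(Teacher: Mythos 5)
Your proposal is correct and follows essentially the same route as the paper's proof: form the residual of $\hat{L}$ on the Lyapunov equation with $S$, subtract the exact equation for $\hat{\Sigma}$ to get a Sylvester equation in $\hat{\Sigma}-S$, vectorize via the Kronecker sum $L^{(k)}$, and apply Lemma \ref{lemmasdd} together with $\|\cdot\|_\infty \leq \|\cdot\|_2$ on the residual. Your explicit reconciliation of the half-vectorized residual $\xi$ with the full-vectorized system via the symmetry of $E$ is a welcome piece of bookkeeping that the paper's proof glosses over (it simply ``reshapes'' $\mathbf{c}$ into a $p\times p$ matrix), but it does not change the substance of the argument.
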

 \begin{proof}
 Let $\mathbf{c} = \mathbf{f} - H \hat{\mathbf{\zeta}'}$ and let $C$ be $\mathbf{c}$ reshaped such that $C \in \mathbb{R}^{p \times p}$. Then,

 \begin{align}
 \hat{L} S + S\hat{L}^T = 2I + C. \label{lyap5}
 \end{align}
  Combining (\ref{lyap5}) with $\hat{L} \hat{\Sigma} + \hat{\Sigma} \hat{L}^T = 2 I$ yields 
\begin{align}
  \hat{L}( \hat{\Sigma}-S) + (\hat{\Sigma}-S) \hat{L}^T = C,
\end{align}
  which can be written as
\begin{align}
  \text{vec}( \hat{\Sigma}-S) = (I \otimes \hat{L} + \hat{L}^T \otimes I)^{-1} \mathbf{c}. \nonumber
\end{align}
Taking the infinity norm and applying Lemma \ref{lemmasdd} gives 

 \begin{align}
\| \text{vec}(\hat{\Sigma} - S)\|_{\infty} \leq&   \|(I \otimes \hat{L} + \hat{L}^T \otimes I)^{-1} \|_{\infty} \; \| \mathbf{c} \|_{\infty} \leq \|(I \otimes \hat{L} + \hat{L}^T \otimes I)^{-1} \|_{\infty} \; \| \mathbf{c} \|_2,  
 \nonumber \\
\| \hat{\Sigma} - S\|_{\infty}  \leq & \frac{\xi}{\alpha}. \nonumber 
 \end{align} 
 \end{proof}

 \begin{corollary} \label{cor1}
 Let $\tilde{\Sigma}$ be the covariance matrix estimate obtained by solving the $l_1$-norm penalized maximum likelihood problem $ \tilde{\Sigma} = \arg \max_{Z \succ 0} -\log \det Z - \text{\emph{trace}} (SZ^{-1}) - \lambda \| Z^{-1} \|_1$. Let $\hat{\Sigma}$ be the covariance matrix estimate for which Theorem \ref{thmbound} is satisfied. Then, 
 \begin{align}
 \| \tilde{\Sigma} - \hat{\Sigma} \|_{\infty} \leq \frac{\xi}{\alpha} + \lambda. \nonumber
 \end{align} 
 \end{corollary}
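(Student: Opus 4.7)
The plan is to reduce the corollary to a triangle inequality between the two already-available bounds: the one from Theorem \ref{thmbound} that bounds $\hat{\Sigma}$ against $S$, and a classical KKT bound that bounds the graphical-LASSO estimate $\tilde{\Sigma}$ against $S$. First I would recall from Theorem \ref{thmbound} that
\begin{align}
\|\hat{\Sigma} - S\|_{\infty} \leq \frac{\xi}{\alpha}. \nonumber
\end{align}
So the only new work is producing the analogous bound $\|\tilde{\Sigma} - S\|_{\infty} \leq \lambda$ and then combining via the triangle inequality.

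To obtain the $\lambda$-bound on $\tilde{\Sigma}$, I would rewrite the penalized maximum-likelihood problem in terms of the precision matrix $P = Z^{-1}$: substituting yields the equivalent unconstrained problem $\arg\max_{P \succ 0} \log\det P - \text{trace}(SP) - \lambda \|P\|_1$, which is exactly the standard graphical-LASSO formulation studied in \citep{Friedman2008, banerjee2008}. I would then write down the subgradient optimality (KKT) condition for this strictly convex program: at the optimum $\tilde{P}$,
\begin{align}
\tilde{P}^{-1} - S - \lambda\, \Gamma = 0, \nonumber
\end{align}
where $\Gamma \in \partial \|\tilde{P}\|_1$ is an entrywise subgradient with $|\Gamma_{ij}| \le 1$. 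Since $\tilde{P}^{-1} = \tilde{\Sigma}$, taking the entrywise maximum absolute value of both sides gives
\begin{align}
\|\tilde{\Sigma} - S\|_{\infty} = \lambda \,\|\Gamma\|_{\infty} \leq \lambda. \nonumber
\end{align}

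With both bounds in hand, the corollary follows directly by the triangle inequality on the entrywise $\ell_{\infty}$ norm:
\begin{align}
\|\tilde{\Sigma} - \hat{\Sigma}\|_{\infty} \;\leq\; \|\tilde{\Sigma} - S\|_{\infty} + \|S - \hat{\Sigma}\|_{\infty} \;\leq\; \lambda + \frac{\xi}{\alpha}. \nonumber
\end{align}

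I do not expect any serious technical obstacle: Theorem \ref{thmbound} already handles the GGIM side, and the KKT derivation for the graphical LASSO is standard and well-documented in the cited references. The only minor subtlety is being careful with the sign/inversion substitution between $Z$ and $P = Z^{-1}$ in the objective, to confirm that the problem truly reduces to the graphical-LASSO form so that the bound $\|\tilde{\Sigma} - S\|_{\infty} \leq \lambda$ really does apply. Once that reduction is stated, the proof is a two-line triangle inequality.
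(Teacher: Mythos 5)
Your proposal is correct and follows essentially the same route as the paper: the paper cites \citep{banerjee2008} for the fact that $\tilde{\Sigma} = S + \tilde{U}$ with $\|\tilde{U}\|_{\infty}\le\lambda$ (which is exactly the KKT/subgradient bound you derive explicitly) and then concludes by the triangle inequality against the bound of Theorem \ref{thmbound}. The only difference is that you unfold the cited result by writing out the stationarity condition $\tilde{P}^{-1}-S-\lambda\Gamma=0$, which is a harmless elaboration rather than a different argument.
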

\begin{proof}
From \citep{banerjee2008} we have that $\tilde{\Sigma} = S+ \tilde{U}$, where $\| \tilde{U} \|_{\infty} \leq \lambda$. Plugging $S$ in to the statement of Theorem \ref{thmbound} yields
\begin{align}
\| \hat{\Sigma} - S\|_{\infty} = \| \hat{\Sigma} - \tilde{\Sigma} + \tilde{U} \|_{\infty} \leq  \frac{\xi}{\alpha}.\nonumber
\end{align}
Recall that for arbitrary $a$, $b$ we have that $\| a \| \leq \|a+ b \| + \|b \|$. Thus,
\begin{align}
\| \hat{\Sigma} - \tilde{\Sigma}\|_{\infty} \leq \frac{\xi}{\alpha} + \lambda. \nonumber
\end{align}
 \end{proof}
 We have shown, therefore, that the maximum additive difference between an estimated covariance matrix corresponding to a GGIM and the $l_1$-norm penalized maximum likelihood estimate of covariance is bounded.

\section{GGIM estimation with sparse directed conditional dependencies }\label{sec:sparse}
It may be that a sparse interaction graph induces a dense set of directed conditional dependence relationships. In this context dense implies that most off-diagonal entries of $\Phi_{a,a}$ for all $a = \{k, l \} \in \mathcal{V}$ are nonzero. However, this violates a common assumption that conditional dependencies among real world data are sparse. 
 We call a GGIM of a Gaussian process \emph{consistent} if equation(\ref{gamma1}) is satisfied for all $a = \{k, l \} \in \mathcal{V}$.  We call a GGIM of a Gaussian process \emph{$\epsilon$-consistent} if $\| \hat{\Phi}_{a,a} - (\hat{L}_{a,a} + \Sigma_{a,b}(\Sigma_{b,b})^{-1}(  \hat{P}_{b,a} - \hat{L}_{b,a})) \| < \epsilon$ for all $a = \{k, l \} \in \mathcal{V}$. 
 

\subsection{Exact approach for minimum $l_1$ norm consistent GGIMs}\label{sec:exact}
Equation (\ref{psplit}) establishes the strong relationship between sparsity of an inverse covariance matrix, $P$, and elements $\Phi_{k,l}$, $\Phi_{l,k}$. Consequently, we approach the problem of estimating a sparse set of $2 \times 2$ matrices $\hat{\Phi}_{a,a}$ assuming that a sparse estimate $\hat{P}$ has already been determined through estimation methods that take into account the diagonal entries such those in \citet{Balmand2016}. For each unique pair of nodes, $k, l$ with $k \neq l$, assume that $\hat{P}_{k,k} \geq \hat{P}_{l,l}$ without loss of generality. Then the minimum $l_1$ norm estimate of every $\hat{\Phi}_{a,a}$ associated with $\hat{P}$ is given by setting 
\begin{align}
\hat{\Phi}_{l,k} = 0, \;\;  \hat{\Phi}_{k,l} = \hat{P}_{k,l}\Big( 1 + \frac{\hat{P}_{l,l}}{\hat{P}_{k,k}} \Big), \label{minphi}
\end{align} 
and iterating over all pairs of nodes.  As $\hat{P}_{k,k} = \frac{ \hat{\Sigma}_{l | b}}{\det(\hat{P}_{a,a})}$ with $a = \{k,l \} $, we can interpret equation (\ref{minphi}) as assuming that the nodes with pairwise higher conditional variance are directed conditionally dependent on the nodes with lower conditional variance.

The GGIM associated with $\hat{\Phi}$ can then be backed out analytically be solving a system of linear equations. Assigning values to $\hat{\Phi}_{k,l}$ and $\hat{\Phi}_{l,k}$ corresponds to defining the single entry $L_{k,l}$ (or $L_{l,k}$). In other words, $\hat{\Phi}$ establishes $\frac{p}{2}(p-1)$ constraint equations of the form
\begin{align}
\hat{L}_{k,l} = \hat{\Phi}_{k,l} - \hat{\Sigma}_{k,b}(\hat{\Sigma}_{b,b})^{-1}(\hat{P}_{b,l} - \hat{L}_{b,l}). \label{leqs}
\end{align}

 Recall that the GGIM estimate satisfies $\hat{L} = (I + \kappa)\hat{P}$, with $\kappa$ skew-symmetric. The $(k,l)$ entry is then $\hat{L}_{k,l} = \hat{P}_{k,l} + \kappa_{k,l} \hat{P}_{l,l} + \kappa_{k,b}\hat{P}_{b,l}$. Thus, the problem of finding the GGIM estimate associated with a sparse directed conditional dependencies becomes determining the $\frac{p}{2}(p - 1)$ independent values of $\kappa$ such that the equations for $\frac{p}{2}(p-1)$ values of $\hat{L}$ are satisfied. For each pair $(k,l)$ and $b = \{ \mathcal{V}\backslash \{k,l\} \}$ we have the following equation
 \begin{align}
 \hat{P}_{k,l}& = \hat{\Phi}_{k,l} + \hat{\Sigma}_{k,b}(\hat{\Sigma}_{b,b})^{-1}(\kappa_{b,a} \hat{P}_{a,l} + \kappa_{b,b} \hat{P}_{b,l}) - \kappa_{k,l} \hat{P}_{l,l} - \kappa_{k,b}\hat{P}_{b,l}. \label{cij}
 \end{align}
Equation (\ref{cij}) can be equivalently written in the form $W\mathbf{y} = \mathbf{z}$ where $\mathbf{y} \in \mathbb{R}^{\frac{p}{2}(p-1)}$ is the vector that stacks all $\kappa_{k,l}$, $\mathbf{z}\in \mathbb{R}^{\frac{p}{2}(p-1)}$ is the vector that stacks all $\hat{P}_{k,l}$, and entries in $W$ are corresponding coefficients on entries of  $\kappa$ from equation (\ref{cij}). Matrix $W$ is invertible and the independent entries of $\kappa$ can be found by solving $\mathbf{y} = W^{-1}\mathbf{z}$. The corresponding GGIM estimate is the graph induced by the matrix $\hat{L} = (I + \kappa)\hat{P}$. 

In general, the resulting GGIM estimate will not be sparse. However, if desired we can manually trade off sparsity of $\hat{\Phi}$ and sparsity of $\hat{L}$ to fit the application at hand. For every unique pair of nodes $(k, l)$ with $k \neq l$ we can explicitly set one of $\{ \hat{\Phi}_{k,l}, \; \hat{\Phi}_{l,k}, \; \hat{L}_{k,l},\; \hat{L}_{l,k} \}$ equal to zero and solve for the elements of $\kappa$. Thereby, we can incorporate prior knowledge and still guarantee that the GGIM and directed conditional independence relationships will be consistent. 

\subsection{Approximate approach for minimum $l_1$ norm  $\epsilon$-consistent GGIMs}
It may be that the manual trade of of sparsity between directed conditional dependence and GGIM described in Section \ref{sec:exact} will not result in a model that is satisfactorily sparse. This is because very exact and complex relationships between edge weights have to be satisfied in order for both models to be consistent and sparse. Even if these relationships are satisfied in the true underlying model, errors in the sample covariance matrix and estimated precision matrix may prevent us from recovering them. Therefore, we would like to consider $\epsilon$-consistent GGIMs as a way to increase sparsity and account for errors. In what follows, we briefly mention one approach for estimation.

We assume that a sparse estimate $\hat{P}$ has been found, and all matrices $\hat{\Phi}_{a,a}$ are given by equation (\ref{minphi}) for all $\{k,l\} \in \mathcal{V}$. Then, a $\epsilon$-consistent GGIM can be found be rewriting equation (\ref{leqs}) as a matrix equation in all off-diagonal elements of $L$ and solving the corresponding LASSO problem
\begin{align*}
\mathbf{\hat{v}} = \min_{\mathbf{v}} \| \mathbf{u} - T \mathbf{v} \|_2^2 + \lambda \|\mathbf{v} \|_1,
\end{align*} 
where $\mathbf{v}$ is the vector of all off-diagonal elements of $\hat{L}$, $T$ is a matrix of appropriate coefficients from (\ref{leqs}), $\mathbf{u}$ is the vector of all terms that are not dependent on any $\hat{L}_{i,j}$, and $\lambda \geq 0$. Tuning $\lambda$ such that the restructured $\hat{L}$ satisfies $\| \hat{\Phi}_{a,a} - (\hat{L}_{a,a} + \Sigma_{a,b}(\Sigma_{b,b})^{-1}(  \hat{P}_{b,a} - \hat{L}_{b,a})) \| < \epsilon$ for all $a = \{k, l \} \in \mathcal{V}$ results in an  $\epsilon$-consistent GGIM.

\section{Final Remarks}\label{sec:final}
In this paper, we have defined a new graphical model from Gaussian data, the Gaussian graphical interaction model. Stationary Gaussian processes on graphs provides a theoretical basis for the model. By leveraging the Lyapunov equations between steady-state covariance matrices and the Laplacian matrices of interaction graphs, we are able to characterize families of directed graphs with equivalent steady-state behavior. Moreover, we have shown that our approach leads to an intuitive definition of directed conditional independence. We have shown that the problem of learning GGIMs can be formulated as a LASSO problem when the measure of sparsity is taken to be the $l_1$-norm. Moreover, we have proven that the estimated covariance matrix associated with a learned GGIM is bounded with respect to the standard $l_1$-norm penalized maximum log-likelihood estimate. The GGIM opens up new avenues for the modeling of directed relationships from Gaussian data and significantly expand upon the capabilities of Gaussian graphical modeling. 

\acks{This work was supported by the Alexander von Humboldt Foundation with funds from the German Federal Ministry of Education and Research (BMBF).}

\begin{appendices}
\section{GGIMs with positive semi-definite covariance matrices} \label{app1}
In this section we discuss GGIMs when the sample covariance matrix is positive semi-definite. In this case we borrow from \citep{Fitch2019} and apply simple projections to accommodate the rank deficiency of the sample covariance matrix. We begin by reviewing some definitions from \citep{Fitch2019}.

Let $\mathbf{e}_p^{(k)}$ be the $k$th standard basis vector for $\mathbb{R}^p$. Let $P_p = I_p - \frac{1}{p}\mathbf{1}_p\mathbf{1}_p^T$. Let $\mathbf{1}^{\perp}_p =$span$\{\mathbf{1}_p\}^{\perp}$ be the subspace of $\mathbb{R}^p$ perpendicular to $\mathbf{1}_p$. Let $Q \in \mathbb{R}^{(p - 1) \times p}$ be a matrix with rows that form an orthonormal basis for $\mathbf{1}^{\perp}_p $. Then the following properties hold,
\begin{align}
Q\mathbf{1}_p = \mathbf{0}, \;\; QQ^T = I_{p-1}, \;\; Q^TQ=P_p. \label{Q}
\end{align}

The \emph{reduced Laplacian} matrix is defined as $\bar{L} = QLQ^T$, and characterizes the Laplacian matrix on $\mathbf{1}^{\perp}_p $. $\bar{L}$ has the same eigenvalues as $L$ except for the 0 eigenvalue and is therefore invertible if the graph is connected \citep{young2010}. We say that a graph is connected if there exists at least one globally reachable node, $k$. In other words, there is a directed path from every node $i$ to $k$. 

Let $\Psi \in \mathbb{R}^{p \times p}$ be a projection matrix onto $\mathbf{1}^{\perp}_p $, where $\Psi$ is not necessarily an orthogonal projection matrix and $\Psi L = L$, that is the image of $L$ is contained in the kernel of $(\Psi-I_p)$. The intuition behind the matrix $\Psi$ is that it characterizes the set of globally reachable nodes. 

Due to the rank deficiency of $\Sigma$, we consider the Layapunov equation  (\ref{lyap1}) restricted to $\mathbf{1}^{\perp}_p $. That is, we are looking for the family of reduced Laplacian matrices, $\bar{L}$ for which the solution, $\bar{\Sigma}$ to $\bar{L} \bar{\Sigma} + \bar{\Sigma} \bar{L} ^T = 2 I_{p-1}$, is $\bar{\Sigma} = Q \Sigma Q^T$. The following proposition follows from statements in \citep{Fitch2019}.
\begin{proposition} \label{prop2}
Consider a stationary Gaussian process $\mathbf{x}  \in \mathbb{R}^p$ with multivariate Gaussian distribution $\mathcal{N}(0,\Sigma)$ with $\Sigma$ positive semi-definite. Then the Laplacian matrices corresponding to the family of steady-state equivalent (on $\mathbf{1}^{\perp}_p $)  interaction graphs between the $p$ variables of $\mathbf{x}$ are given as a function of $ \Psi \in \mathbb{R}^{p \times p}$ projection matrices and $\kappa \in \mathbb{R}^{p \times p}$ skew-symmetric matrices, by
\begin{align}
L(\Psi, \kappa) = \Psi (I_p + \kappa)\Sigma^{+}, \label{lkp}
\end{align}
where $\Sigma^{+}$ denotes the Moore-Penrose pseudo-inverse of $\Sigma$. 
\end{proposition}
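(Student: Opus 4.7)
The natural plan is to reduce the singular case to the positive-definite case of Proposition \ref{prop1} by restricting to $\mathbf{1}_p^{\perp}$. Since $\Sigma$ is PSD with kernel $\text{span}\{\mathbf{1}_p\}$ (as befits a connected graph Laplacian), the reduced covariance $\bar{\Sigma} = Q \Sigma Q^T$ is strictly positive definite on $\mathbb{R}^{p-1}$, and the restriction of the Lyapunov equation (\ref{lyap1}) to $\mathbf{1}_p^{\perp}$ is exactly $\bar{L}\bar{\Sigma} + \bar{\Sigma}\bar{L}^T = 2 I_{p-1}$. Applying Proposition \ref{prop1} to this reduced system immediately yields the family of reduced Laplacians
\begin{align}
\bar{L}(\bar{\kappa}) = (I_{p-1} + \bar{\kappa}) \bar{\Sigma}^{-1}, \quad \bar{\kappa} \in \text{so}(p-1). \nonumber
\end{align}

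The main computational step is to verify the identity
\begin{align}
\Sigma^{+} = Q^T \bar{\Sigma}^{-1} Q, \nonumber
\end{align}
which follows from checking the four Moore--Penrose conditions using the properties (\ref{Q}) together with the fact that $\Sigma$ is symmetric with kernel $\text{span}\{\mathbf{1}_p\}$, so that $\Sigma = Q^T \bar{\Sigma} Q$. Armed with this identity, I would substitute $L(\Psi,\kappa) = \Psi(I_p + \kappa)\Sigma^{+}$ into $QLQ^T$. Using $Q Q^T = I_{p-1}$ and the fact that $\Psi$ acts as the identity on $\mathbf{1}_p^{\perp}$ — which together with $Q \mathbf{1}_p = 0$ gives $Q \Psi Q^T = I_{p-1}$ for any oblique projection $\Psi = I_p - w \mathbf{1}_p^T$ with $\mathbf{1}_p^T w = 1$ — one obtains
\begin{align}
\bar{L} = Q L(\Psi,\kappa) Q^T = (I_{p-1} + Q \Psi \kappa Q^T)\bar{\Sigma}^{-1}, \nonumber
\end{align}
which matches the reduced parameterization with $\bar{\kappa} = Q \Psi \kappa Q^T$, provided this is skew-symmetric. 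Simultaneously, the Laplacian row-sum condition $L(\Psi,\kappa)\mathbf{1}_p = 0$ is immediate from $\Sigma^{+} \mathbf{1}_p = 0$, so the lifted object is a legitimate directed Laplacian.

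For the converse — that every admissible Laplacian decomposes as $L(\Psi,\kappa)$ — I would argue that the ambiguity in lifting $\bar{L}$ back to $\mathbb{R}^{p \times p}$ under the constraint $L \mathbf{1}_p = 0$ is exactly the kernel of the map $L \mapsto Q L Q^T$ restricted to such Laplacians, and this ambiguity is precisely what is parameterized by the choice of oblique projection $\Psi$ onto $\mathbf{1}_p^{\perp}$, while $\kappa$ ranges over $\text{so}(p)$. Given a target reduced skew-symmetric $\bar{\kappa}$, picking $\kappa = Q^T \bar{\kappa} Q$ and $\Psi = P_p$ recovers it; varying $\Psi$ away from the orthogonal projection then sweeps out the lifting freedom. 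The main obstacle will be this lifting step: showing cleanly that $(\Psi,\kappa)$ jointly span all valid lifts, and that when $\Psi$ is non-orthogonal the compatibility $Q \Psi \kappa Q^T = -(Q \Psi \kappa Q^T)^T$ required for skew-symmetry of the induced $\bar{\kappa}$ is preserved. This bookkeeping is the analog of the appeal to \citep{Barnett1967} in Proposition \ref{prop1}, carried out in the rank-deficient setting via the results of \citep{Fitch2019}.
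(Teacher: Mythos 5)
The paper offers no proof of Proposition \ref{prop2} at all---it defers entirely to \citep{Fitch2019}---so your reduction-to-Proposition-\ref{prop1} strategy is the natural one to attempt, and several of your ingredients are right: $\bar{\Sigma}=Q\Sigma Q^T$ is positive definite, the identity $\Sigma^{+}=Q^T\bar{\Sigma}^{-1}Q$ holds (given $\ker\Sigma=\mathrm{span}\{\mathbf{1}_p\}$, which you should state explicitly as a hypothesis), and the lifting freedom is indeed the kernel of $L\mapsto QLQ^T$ intersected with $\{L: L\mathbf{1}_p=0\}$, i.e.\ perturbations of the form $\mathbf{1}_p u^T$ with $u\perp\mathbf{1}_p$.

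The gap is exactly the step you flag, and it does not close under your reading of $\Psi$. You take $\Psi=I_p-w\mathbf{1}_p^T$ with $\mathbf{1}_p^Tw=1$ (range $\mathbf{1}_p^{\perp}$, kernel $\mathrm{span}\{w\}$) and then need $Q\Psi\kappa Q^T$ to be skew-symmetric. It is not: $Q\Psi\kappa Q^T=Q\kappa Q^T+(Qw)(Q\kappa\mathbf{1}_p)^T$, whose symmetric part is $\tfrac{1}{2}\bigl[(Qw)(Q\kappa\mathbf{1}_p)^T+(Q\kappa\mathbf{1}_p)(Qw)^T\bigr]$; a rank-one $uv^T$ has vanishing symmetric part only when $u=0$ or $v=0$, so this survives unless $w\propto\mathbf{1}_p$ or $\kappa\mathbf{1}_p=0$. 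Hence for generic $(\Psi,\kappa)$ your $L(\Psi,\kappa)$ violates the reduced Lyapunov equation, and the family also misses valid lifts (for $p=2$, since $\mathbf{1}_p^T\Psi=0$ forces balanced graphs, it produces only symmetric $L$, whereas the true steady-state equivalent family contains non-symmetric members). The resolution is that the projection intended here---and consistent with \citep{Fitch2019}, where $\Psi$ ``characterizes the globally reachable nodes''---is the one with \emph{kernel} $\mathrm{span}\{\mathbf{1}_p\}$, namely $\Psi=I_p-\mathbf{1}_pc^T$ with $\mathbf{1}_p^Tc=1$; the condition $\Psi L=L$ then says $c$ is a left null vector of $L$. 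With this $\Psi$ one has $Q\Psi=Q$, hence $QL(\Psi,\kappa)Q^T=(I_{p-1}+Q\kappa Q^T)\bar{\Sigma}^{-1}$ with $Q\kappa Q^T$ genuinely skew, so the obstacle disappears, and $L(\Psi,\kappa)=(I_p+\kappa)\Sigma^{+}-\mathbf{1}_p\,c^T(I_p+\kappa)\Sigma^{+}$ exhibits precisely the rank-one lifting freedom $\mathbf{1}_pu^T$, $u\perp\mathbf{1}_p$, that your converse argument requires. Rebuild the verification and the surjectivity step around that form of $\Psi$ and your outline goes through.
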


For the problem of learning GGIMs given a positive semi-definite $\Sigma$, we follow a nearly identical approach to Section \ref{sec:ggiml} and focus on estimating the Laplacian matrix directly. The associated Lyapunov equation is 
\begin{align}
Q L Q^T \bar{\Sigma} + \bar{\Sigma} Q L Q^T = 2I_{p-1},
\end{align}
which is linear in $L$ and can easily be written as a LASSO problem similar to (\ref{obj2}). The estimates $\hat{\Psi}$ and $\hat{\kappa}$ can be uniquely determined from a directed Laplacian estimate $\hat{L}$. 
\end{appendices}

\bibliography{GGM_revised}

\end{document}